%% 
%% Copyright 2007-2025 Elsevier Ltd
%% 
%% This file is part of the 'Elsarticle Bundle'.
%% ---------------------------------------------
%% 
%% It may be distributed under the conditions of the LaTeX Project Public
%% License, either version 1.3 of this license or (at your option) any
%% later version.  The latest version of this license is in
%%    http://www.latex-project.org/lppl.txt
%% and version 1.3 or later is part of all distributions of LaTeX
%% version 1999/12/01 or later.
%% 
%% The list of all files belonging to the 'Elsarticle Bundle' is
%% given in the file `manifest.txt'.
%% 
%% Template article for Elsevier's document class `elsarticle'
%% with harvard style bibliographic references

\documentclass[preprint,12pt,number]{elsarticle}

%% Use the option review to obtain double line spacing
%% \documentclass[authoryear,preprint,review,12pt]{elsarticle}

%% Use the options 1p,twocolumn; 3p; 3p,twocolumn; 5p; or 5p,twocolumn
%% for a journal layout:
%% \documentclass[final,1p,times,authoryear]{elsarticle}
%% \documentclass[final,1p,times,twocolumn,authoryear]{elsarticle}
%% \documentclass[final,3p,times,authoryear]{elsarticle}
%% \documentclass[final,3p,times,twocolumn,authoryear]{elsarticle}
%% \documentclass[final,5p,times,authoryear]{elsarticle}
%% \documentclass[final,5p,times,twocolumn,authoryear]{elsarticle}

%% For including figures, graphicx.sty has been loaded in
%% elsarticle.cls. If you prefer to use the old commands
%% please give \usepackage{epsfig}

%% The amssymb package provides various useful mathematical symbols
\usepackage{amssymb}
\usepackage{amsmath, amssymb, bm}
\usepackage{booktabs}  % For professional-looking tables
\usepackage{array, hyperref, url}
\usepackage{booktabs}
\usepackage{listings}
\usepackage{amsthm}
\newtheorem{theorem}{Theorem}
\newtheorem{lemma}[theorem]{Lemma}
\newtheorem{definition}{Definition}
%% The amsmath package provides various useful equation environments.
\usepackage{amsmath}
\usepackage{algorithm,algorithmic}
%% The amsthm package provides extended theorem environments
%% \usepackage{amsthm}
%\usepackage{titlesec}
%% The lineno packages adds line numbers. Start line numbering with
%% \begin{linenumbers}, end it with \end{linenumbers}. Or switch it on
%% for the whole article with \linenumbers.
%% \usepackage{lineno}

%\journal{Neurocomputing}

\begin{document}

\newcommand{\R}{\mathbb{R}}
\newcommand{\Rd}{\R^{d}}
\newcommand{\Rn}{\R^{n}}
\newcommand{\Rm}{\R^{m}}
\newcommand{\RN}{\R^{N}}
\newcommand{\Rnn}{\R^{n\times n}}
\newcommand{\N}{\mathbb{N}}
\newcommand{\Z}{\mathbb{Z}}
\newcommand{\C}{\mathbb{C}}
\newcommand{\X}{\mathbb{X}}
\newcommand{\sigmatwo}{\sigma^{2}}
\newcommand{\phalf}{\frac{p}{2}}
\newcommand{\half}{\frac{1}{2}}
\newcommand{\inverse}{^{-1}}
\newcommand{\symmetricPart}{\text{sym}}
\newcommand{\antisymmetricPart}{\text{antisym}}
\newcommand{\abs}[1]{\left\lvert {#1} \right\rvert}
\newcommand{\id}[1][ ]{\mathtt{id}_{#1}}

\newcommand{\transpose}{^{\mathsf{T}}}
\newcommand{\squared}{^{2}}

\newcommand{\hadamard}{\odot}

\newcommand{\subscriptij}{_{i,j}}
\newcommand{\tzero}{t_{0}}
\newcommand{\eminusrt}{e^{-rt}}
\newcommand{\intzerot}{\int_{0}^{t}}

\newcommand{\suchThat}{\text{ s.t. }}
\newcommand{\argmin}{\mathrm{argmin}}
\newcommand{\argmax}{\mathrm{argmax}}
\newcommand{\partition}{\pi}
\newcommand{\timeHorizon}{T}
\newcommand{\timeWindow}{{[0,\timeHorizon]}}
\newcommand{\intZeroTimeHorizon}{\int_{0}^{\timeHorizon}}

\newcommand{\hurstExponent}{H}
\newcommand{\volterraKernel}{{K}_{\hurstExponent}}
\newcommand{\volterraProcess}{\zeta}
\newcommand{\covarianceKernel}{{R}_{\hurstExponent}}

\newcommand{\zeroSleqTtimeHorizon}{0\leq s\leq t \leq \timeHorizon}
\newcommand{\zerostTimeHorizon}{0\leq s, t \leq \timeHorizon}
\newcommand{\simplex}{\lbrace (s,t) \in \R\squared:\,  \zeroSleqTtimeHorizon\rbrace}
\newcommand{\timesSleqUleqT}{s \leq u \leq t}

\newcommand{\convexHull}{\mathrm{Conv}}

\newcommand{\restrictedto}[1]{\arrowvert_{#1}}

\newcommand{\trace}{\mathtt{trace}}

\newcommand{\positivePartOfMinimum}{\text{min}_{+}}

\newcommand{\dotEta}{\dot{\eta}}
\newcommand{\Kappa}{\mathfrak{K}}

\newcommand{\ones}{\mathbf{1}}

\newcommand{\infinity}{\infty}

%notation for derivatives
\newcommand{\derivative}{^{\prime}}
\newcommand{\gradient}{\nabla}
\newcommand{\Fprime}{F\derivative}
\newcommand{\partialij}{\partial^{2}_{i,j}}
\newcommand{\gradx}{\nabla_{x}}
\newcommand{\gradz}{\nabla_{z}}
\newcommand{\Hessianx}{\nabla^{2}_{xx}}
\newcommand{\Hessianz}{\nabla^{2}_{zz}}

%norms
\newcommand{\norm}[1][\cdot]{\left\lVert {#1}\right\rVert}
\newcommand{\tripleNorm}[1]{\vert \vert \vert {#1}\vert \vert \vert }
\newcommand{\supNorm}[1][\cdot]{\lVert #1 \rVert_{\infty}}
\newcommand{\HoelNorm}[2][\cdot]{\lVert {#1} \rVert_{{#2}\text{-H\"ol}}}
\newcommand{\pvarNorm}[2][\cdot]{\lVert {#1} \rVert_{{#2}\text{-var}}}
\newcommand{\pvarNormInterval}[3][\cdot]{\lVert {#1} \rVert_{{#2}\text{-var}, {#3}}}
\newcommand{\lonenorm}[1][\cdot]{\norm[#1]_{1}}
\newcommand{\ltwonorm}[1][\cdot]{\norm[#1]_{2}}

%Geometric notation
\newcommand{\oneforms}[2]{ {\Omega}^{{1}} ({#1},{#2})}
\newcommand{\alphatilde}{\tilde{\alpha}}
\newcommand{\sectionsTM}[1][TM]{\Gamma(#1)}
\newcommand{\TxM}{T_x M}
\newcommand{\TmM}{T_m M}
\newcommand{\Homomorphisms}{\text{Hom}}
\newcommand{\manifold}{\mathcal{M}}

%notation functional analysis
\newcommand{\banachSpace}{\mathcal{B}}
\newcommand{\pairing}[2]{\langle{#1},\, {#2} \rangle }
\newcommand{\Lone}{L^{1}}
\newcommand{\Ltwo}{L\squared}
\newcommand{\tripleNormClosure}{L^1(\timeWindow;\Ltwo(\Prob))}
\newcommand{\vectorSpace}{\boldsymbol{V}}
\newcommand{\smoothFunctions}[2]{C^{\infty}(#1,#2)}
\newcommand{\smoothCompactlySupportedFunctions}[2]{C^{\infty}_{c}(#1,#2)}
\newcommand{\Cpvar}[1][p]{C^{{#1}\text{-var}}}
\newcommand{\Contpvar}[3][p]{C^{{#1}\text{-var}}({#2},{#3})}
\newcommand{\diracDelta}{\delta}
\newcommand{\HoelderPaths}[1][\alpha]{C^{{#1}-\text{H\"ol}}}
\newcommand{\approxAdditivepVariation}{AA_{p\text{-var}}}
\newcommand{\semigroupP}{\mathtt{P}}
\newcommand{\semigroupT}{\mathtt{T}}
\newcommand{\generatorL}{{\mathsf{L}}}
\newcommand{\generatorA}{\mathsf{A}}
\newcommand{\CalphaHoelderLoc}{C^{\alpha\text{-H\"ol}}_{\text{loc}}}
\newcommand{\ContFunctionsOfEllipticPDEregularity}{\mathcal{C}^{\alpha}}
\newcommand{\sobolevSpace}[1][1,2]{W^{#1}}
\newcommand{\sobolevSpaceOneTwo}{\sobolevSpace}
\newcommand{\sobolevSpaceCompactSupport}[1][1,2]{\sobolevSpace[#1]_{0}}
\newcommand{\sobolevSpaceOneTwoCompactSupport}{\sobolevSpaceOneTwo_{0}}
\newcommand{\SobolevSpace}[1][1,2]{W^{#1}}

%notation Probability
\newcommand{\Prob}{{{P}}}
\newcommand{\probabilityLaw}{\text{Law}}
\newcommand{\probabilityQ}{{{Q}}}
\newcommand{\probabilityDensity}{{p}}
\newcommand{\Expectation}{{{E}}}
\newcommand{\Variance}{\mathrm{Var}}
\newcommand{\CoVariance}{\mathrm{Cov}}
\newcommand{\correlation}{\mathrm{corr}}
\newcommand{\likelihood}{\mathcal{L}}
\newcommand{\loglikelihood}{\log \likelihood}
\newcommand{\sigmaAlgebra}{\mathfrak{F}}
\newcommand{\sigmaAlgebraG}{\mathcal{G}}
\newcommand{\setAlgebraA}{\mathcal{A}}
\newcommand{\setAlgebraB}{\mathcal{B}}
\newcommand{\orthogonal}{^{\perp}}
\newcommand{\measurableSpace}{\big(\Omega,\sigmaAlgebra \big)}
\newcommand{\probabilitySpace}{\big(\Omega,\sigmaAlgebra, \Prob \big)}
\newcommand{\filteredMeasurableSpace}{\big(\Omega,\sigmaAlgebra, (\sigmaAlgebra_t)_t \big)}
\newcommand{\stochasticBase}{\big(\Omega,\sigmaAlgebra = (\sigmaAlgebra_t)_t , \Prob  \big)}
\newcommand{\filtrationF}{\mathfrak{F}}
\newcommand{\internalHistory}{\filtrationF}
\newcommand{\iid}{\overset{\text{i.i.d.}}{\sim}}
\newcommand{\gaussian}[2]{\mathcal{N}({#1},{#2})}
\newcommand{\normalPDF}[3]{p_{\gaussian{#2}{#3}}\left( #1 \right)}
\newcommand{\WienerMeasure}[1][ ]{\mu_{#1}}
\newcommand{\brownianMotion}{W}
\newcommand{\geometricBrownianMotion}{X}
\newcommand{\compensator}{\Lambda}
\newcommand{\EDFfun}[1][e]{\hat{F}_{#1}}
\newcommand{\dirichletLaw}[1][\dirparam]{\Prob_{#1}}
\newcommand{\dirichletDensity}[1][\dirparam]{f_{#1}}
\newcommand{\targetDensity}{f}
\newcommand{\proposalDensity}{g}
\newcommand{\dirichlet}[1][\dirparam]{\text{Dir}_{#1}}
\newcommand{\boundedVariationPart}{A}
\newcommand{\martingale}{M}
\newcommand{\semimartingale}{S}
\newcommand{\mi}{I}

%Indicator function%
\def\one{\mbox{1\hspace{-4.25pt}\fontsize{12}{14.4}\selectfont\textrm{1}}}

\newcommand{\spaceX}{\mathcal{X}}
\newcommand{\spaceY}{\mathcal{Y}}

% Notation for MINE
\newcommand{\jointeval}[1][\cdot]{\mu\left(#1\right)}
\newcommand{\prodeval}[1][\cdot]{\nu\left(#1\right)}
\newcommand{\donskervaradhanloss}{{v}}
\newcommand{\lossfun}{{\ell}}
\newcommand{\projection}{{p}}

\begin{frontmatter}

%% Title, authors and addresses

%% use the tnoteref command within \title for footnotes;
%% use the tnotetext command for theassociated footnote;
%% use the fnref command within \author or \affiliation for footnotes;
%% use the fntext command for theassociated footnote;
%% use the corref command within \author for corresponding author footnotes;
%% use the cortext command for theassociated footnote;
%% use the ead command for the email address,
%% and the form \ead[url] for the home page:
%% \title{Title\tnoteref{label1}}
%% \tnotetext[label1]{}
\author[label1,label2]{Taurai Muvunza}
\ead{t.muvunza@qmul.ac.uk}
\author{Egor Kraev}
\ead{egor.kraev@gmail.com}
\author[label2]{Pere Planell-Morell}
\ead{pere.planell@wise.com}
\author[label1,label3]{Alexander Y. Shestopaloff \corref{cor1}}
\ead{a.shestopaloff@qmul.ac.uk}

\cortext[cor1]{Corresponding author}
%%\author{Name\corref{cor1}\fnref{label2}}
%% \ead{email address}
%% \ead[url]{home page}
%% \fntext[label2]{}
%% \cortext[cor1]{}
%% \affiliation{organization={},
%%            addressline={}, 
%%            city={},
%%            postcode={}, 
%%            state={},
%%            country={}}
%% \fntext[label3]{}

%\title{Feature Selection with Neural Estimation of Mutual Information} %% Article title
\title{MINERVA: Mutual Information Neural Estimation for Supervised Feature Selection}
%% use optional labels to link authors explicitly to addresses:
%% \author[label1,label2]{}
\affiliation[label1]{organization={Queen Mary University of London},
             addressline={School of Mathematical Sciences},
             city={London},
             postcode={E1 4NS},
             country={United Kingdom}}
 \affiliation[label2]{organization={Wise Payments Ltd},
             addressline={65 Clifton Street},
             city={London},
             postcode={EC2A 4JE},
%%             state={},
             country={United Kingdom}}
             
\affiliation[label3]{organization={Memorial University of Newfoundland},
             addressline={Department of Mathematics and Statistics},
             city={St. John's},
             postcode={A1C 5S7},
             country={Canada}}

%\author{} %% Author name

%% Author affiliation
%\affiliation{organization={},%Department and Organization
%            addressline={}, 
%            city={},
%            postcode={}, 
%            state={},
%            country={}}

%% Abstract
\begin{abstract}
%% Text of abstract
Existing feature filters rely on statistical pair-wise dependence metrics to model feature-target relationships, but this approach may fail when the target depends on higher-order feature interactions rather than individual contributions. We introduce Mutual Information Neural Estimation Regularized Vetting Algorithm (MINERVA), a novel approach to supervised feature selection based on neural estimation of mutual information between features and targets. We paramaterize the approximation of mutual information with neural networks and perform feature selection using a carefully designed loss function augmented with sparsity-inducing regularizers. Our method is implemented in a two-stage process to decouple representation learning from feature selection, ensuring better generalization and a more accurate expression of feature importance. We present examples of ubiquitous dependency structures that are rarely captured in literature and show that our proposed method effectively captures these complex feature-target relationships by evaluating feature subsets as an ensemble. % our proposed method can capture these complex relationships between features and targets by evaluating subset of features as an ensemble. %We validate our approach by conducting extensive experiments on synthetic and real-life fraud datasets. 
Experimental results on synthetic and real-life fraud datasets demonstrate the efficacy of our method and its ability to perform exact solutions.

\end{abstract}

%%Graphical abstract
%\begin{graphicalabstract}
%\includegraphics{grabs}

%\end{graphicalabstract}

%%Research highlights

%% Keywords
\begin{keyword}
feature selection \sep neural networks \sep mutual information \sep fraud
%% keywords here, in the form: keyword \sep keyword

%% PACS codes here, in the form: \PACS code \sep code

%% MSC codes here, in the form: \MSC code \sep code
%% or \MSC[2008] code \sep code (2000 is the default)

\end{keyword}

\end{frontmatter}

%% Add \usepackage{lineno} before \begin{document} and uncomment 
%% following line to enable line numbers
%% \linenumbers

%% main text
%%

%% Use \section commands to start a section
%\section{Example Section}
\section{Introduction}
\label{intro}
%%%%%
High dimensional data generally contains irrelevant and redundant features, which require large storage, high computation and lead to low performance models \citep{liu2022improving}. Effectively selecting important features in high-dimensional datasets is a long-standing challenge in machine learning and statistics \citep{koyama2022effective}. Methods of dimensionality reduction can be divided into two classes: feature selection and feature extraction. The goal of feature selection is to represent high-dimensional datasets with a subset of the original features. On the contrary, feature extraction methods such as the principal component analysis \citep{jolliffe2005principal} transforms the original features into new features by projecting the data as a linear combination of its original features that is represented only by the first few components. Since feature extraction methods preserve as much data variability as possible, their main drawback is the loss of physical meaning of the features \citep{nguyen2014effective,chandrashekar2014survey}. By selecting a subset of the original features, feature selection preserves the feature interpretability, making it a preferred choice in several domains \citep{tripathi2020interpretable,kim2015mind}.

%%%%%%%%%%
Feature selection methods can be divided into two classes:  wrappers and filters.
The idea behind wrapper methods is that they consider the predictor algorithm as a black box and the predictor performance as the objective function to evaluate the feature subset. The combination of a subset of features that maximize the objective function is found through a subroutine heuristic search, with different features removed from the data \citep{kohavi1997wrappers}. Since the evaluation of subsets becomes an NP hard problem, wrappers are usually computationally expensive \citep{chandrashekar2014survey}. 

%Wrappers assume knowledge of the learning model and use the learning process as a subroutine.  
Filters utilize feature ranking techniques such as a score of dependence between features and target, and select a subset of features based on this score. Unlike wrappers that are model dependent, filters are applied before classification to remove less relevant features. The main challenge in filters is that of finding a subset of original features from a high dimensional dataset, such that a predictor algorithm that is trained on data only containing these features generates a classifier with the highest possible accuracy. From a theoretic standpoint, this makes discriminating between relevant and irrelevant features a ubiquitous problem \cite{kohavi1997wrappers,john1994irrelevant}.

Feature selection has been thoroughly investigated in literature, albeit under assumptions that do not apply to most real world scenarios. For example,  earlier studies in statistics by \cite{narendra1977branch}, \cite{draper1998applied}, \cite{miller1984selection}, \cite{devijver1982pattern}, and \cite{ben198235} addressed feature selection, but with a significant concentration on linear regression. The simplest feature selection method involves considering the $L_1$ regularization into the model, such as the Least Absolute Shrinkage and Selection Operator (Lasso) \citep{tibshirani1996regression}. Although Lasso-based feature selection methods are computationally inexpensive and widely applied, they are limited to linear models and may not be best used to describe nonlinear relationships.

%%%%Introduce nonlinear models
Several methods to capture nonlinear relationships in feature selection have been proposed. The most common method involves assigning a statistical score to evaluate pairwise nonlinear relationship between the feature and target, and selecting the top features with the most relevance to the output \citep{peng2005feature}. Generally, widely used approaches include mutual information \citep{cover1999elements}, Hilbert-Schmidt Independence Criterion (HSCI) \citep{gretton2005measuring} and distance correlation \citep{szekely2009brownian}. Earlier versions of Lasso were limited to linear models, but later versions were developed that incorporate nonlinearity \citep{roth2004generalized,li2005lasso,yamada2014high}. 

%%%%Introduce methodology, in the context of the original paper
Mutual information (MI) is defined as a measure of the amount of information one random variable contains about another \citep{cover1999elements}. MI has been widely applied in data science as a fundamental quantity for measuring the relationship between random variables \citep{belghazi2018mutual}. Over the years, MI-based feature selection methods have gained popularity due to their effectiveness, ease of use and strong theoretical foundations rooted in information theory. More precisely, MI is used in feature selection to find the minimal feature subset with maximum MI with respect to the target variable \cite{brown2012conditional,liu2022improving}. 

Since searching for the optimal feature subset is computationally intractable, numerous MI-based feature selection methods employ Maximum Relevance with Minimum Redundancy (MRMR) \cite{peng2005feature,nguyen2014effective,meyer2008information,yang1999data}, a technique that has demonstrated competitive performance in dimensionality reduction \citep{zebari2020comprehensive}. 

Despite being a pivotal quantifier in feature selection, MI has historically been difficult to compute \citep{paninski2003estimation}. In addition, exact estimation of MI is tractable for discrete random variables, and in selective cases where the closed form probability density function of the random variables is known \citep{belghazi2018mutual}. Existing approaches rely on nonparametric differential entropy to estimate the MI of continuous random variables \citep{beirlant1997nonparametric} while others first estimate the density using kernel density estimators such as $k$-nearest neighbors \citep{kraskov2004estimating,wang2006nearest,leonenko2008class}. However, nonparametric methods are inefficient \citep{beirlant1997nonparametric} and kernel density estimators generally fail to converge to the true measure \citep{perez2008estimation}.  

A new approach to estimate MI based on neural networks was proposed in \cite{belghazi2018mutual}. The authors propose Mutual Information Neural Estimation (MINE), a neural estimator for MI based on the dual representation of Kullback-Leibler (KL)-divergence \cite{kullback1997information}. A more general expression of MI is given by:
\begin{equation}
I(X; Y) = \int_{x \times y} \log \frac{d\mathbb{P}_{XY}}{d\mathbb{P}_X \otimes \mathbb{P}_Y} \, d\mathbb{P}_{XY}
\end{equation}
\noindent where $\mathbb{P}_{XY}$ is the joint probability distribution; and $\mathbb{P}_{X}$ and $\mathbb{P}_{Y}$ are the marginals. 
In this paper, we introduce a novel approach to supervised feature selection based on neural estimation of mutual information between features and targets. We utilize MINE since it is a consistent, flexible and scalable method for estimating mutual information \citep{belghazi2018mutual}. We propose a two-stage framework that combines Mutual Information Neural Estimation with Regularized Vetting to learn complex dependency structure between random variables. Our approach relies on a carefully designed loss function to simultaneously estimate mutual information and perform feature selection by integrating a variational mutual information estimator with sparsity-inducing regularizers. Through experiments on challenging synthetic and real-world feature selection problems, we show that the proposed method compare favorably with existing feature selection methods. %We call our method \emph{MINERVA}, Mutual Information Neural Estimation Regularized Vetting Algorithm. 
MINERVA belongs to the class of filters, and utilizes the mutual information as score.

The remainder of the paper is organized as follows: Section \ref{sec.mine} presents the background on approximating mutual information using neural networks, which serves as the foundation of our method. We introduce our methodology in Section \ref{sec.method} and discuss experiments in Section \ref{sec.experiments}. Finally, we conclude in Section \ref{sec.conclusion}.

%% Labels are used to cross-reference an item using \ref command.
\section{Background: Neural estimation of mutual information}
\label{sec.mine}

Consider two random variables $X$ and $Y$, their mutual information can be defined as the reduction in uncertainty of $X$ given the knowledge of $Y$ \citep{cover1999elements}:
\begin{equation}
    I(X;Y) = H(X)-H(X|Y)
\end{equation}
\noindent where $H$ is the Shannon entropy \citep{shannon1948mathematical} and $H(X|Y)$ is the conditional entropy of $X$ given $Y$. MI can also be expressed as KL-divergence between the joint distribution and the product of the marginals: $I(X;Y)=D_{KL}(P(X, Y) \parallel P(X) \otimes P(Y))$, where $D_{KL}$ is the Kullback–Leibler divergence, and $\displaystyle P_{X}\otimes P_{Y}$ is the outer product distribution which assigns probability $P_{X}(x)\cdot P_{Y}(y)$ to each $x,y$ pairs.

MINE uses the Donsker-Varadhan (DV) dual representation of KL-divergence \citep{donsker1983asymptotic}:
\begin{align}
    D_{K L}(\mathbb{P} \| \mathbb{Q})=\sup _{T: \Omega \rightarrow \mathbb{R}} \mathbb{E}_{\mathbb{P}}[T]-\log (\mathbb{E}_{\mathbb{Q}}[e^{T}]),
\end{align}
\noindent where $\mathbb{P}$ and $\mathbb{Q}$ are arbitrary distributions and the supremum is taken over all functions $T$ such that the two expectations are finite. $T$ is an arbitrary function mapping from the sample space to real number $\mathbb{R}$ and $\mathcal{F}$ denotes any class of integrable functions $T: \Omega \rightarrow\mathbb{R}$. DV dual representation allows for the estimation of a variational bound of the KL divergence and finding the tightest point of this bound, specified by:
\begin{align}
    D_{KL}(\mathbb{P} \| \mathbb{Q}) \geq \sup _{T \in \mathcal{F}} \mathbb{E}_{\mathbb{P}}[T]-\log (\mathbb{E}_{\mathbb{Q}}[e^{T}])
\end{align}
Given the above expression, mutual information can be rewritten in terms of KL divergence between the joint and product of marginals:
\begin{align}
    I(X ; Y) \geq I_{\Theta}(X, Y)=\sup _{\theta \in \Theta} \mathbb{E}_{\mathbb{P}_{X Y}}[T_{\theta}]-\log (\mathbb{E}_{\mathbb{P}_{X} \otimes \mathbb{P}_{Y}}[e^{T_{\theta}}])
\end{align}
Finally, since true distributions are unknown, we resort to an empirical estimator which replaces the expectations with sample-based approximations:
\begin{align}
    I \widehat{(X ; Z)}_{n}=\sup _{\theta \in \Theta} \mathbb{E}_{\mathbb{P}_{X Z}^{(n)}}[T_{\theta}]-\log (\mathbb{E}_{\mathbb{P}_{X}^{(n)} \otimes \hat{\mathbb{P}}_{Z}^{(n)}}[e^{T_{\theta}}])
\end{align}
\noindent where $\mathcal{F}$ is chosen to be the family of functions $T_{\theta}: \mathcal{X} \times \mathcal{Y} \rightarrow \mathbb{R}$ parameterized by a deep neural network with parameters $\theta \in \Theta$. The main advantage of representing mutual information as dual representation of KL-divergence is that the estimator no longer depends on intractable probabilities to estimate the expectation in the bound, as samples of $X$ and $Y$ can be directly used instead.

Given samples  	$(x_1, y_1), 	\dots,	(x_n, y_n)$, from the joint distribution of $X$ and $Y$, the representation in equation (5) can be used to estimate the mutual information of the two random variables, where the functions $T_\theta$ are parameterized by a neural network and the empirical objective function is maximized  by gradient ascend in the parameter space $\Theta$. Since empirical samples are used in DV dual representation as shown in equation (5), the first expectation $\mathbb{E}_{\mathbb{P}_{XY}} \mathcal{F}(X,Y)$ is computed by:
\begin{equation*}
	\frac{1}{n}\sum_{i=1}^{n} \mathcal{F}(x_i, y_i)
\end{equation*}
\noindent and the second expectation $\mathbb{E}_{\mathbb{P}_{X} \otimes \mathbb{P}_{Y}}\mathcal{F}(X,Y)$ is given by:
\begin{equation*}
	\frac{1}{n}\sum_{i=1}^{n} \exp\left( \mathcal{F}(x_i, y_{\sigma(i)})\right)
\end{equation*}
\noindent where $\sigma$ is a permutation used to shuffle the $Y$ samples and transform the samples $(x_1, y_1), 	\dots,	(x_n, y_n)$ into samples from $\Prob_{X} \otimes \Prob_{Y}$. We rely on this approach to construct a feature selection filter based on neural estimation of mutual information.

\section{Methodology}
\label{sec.method}
\begin{definition}
Let $\spaceX \subset \Rd$ and $\spaceY \subset \R^{e}$ represent sample spaces where
$X$ and $Y$ are random variables taking values in $\spaceX$ and $\spaceY$
respectively. We define $Y$ as the target of a prediction task, and $X$ as a set of features to use in the prediction task.
\end{definition}

Given $n$ empirical samples $(x_1, y_1), \dots, (x_n, y_n)$ from the joint distribution $\Prob_{X Y}$, a permutation $\sigma \in S_n$ where $S_n$ is a set of all possible permutations of indices $\{1,...,n\}$, a real valued function $f: \spaceX \times \spaceY \rightarrow \R$, and a $d$-dimensional vector $\projection \in \Rd$, we estimate the expectations in equation (5) as follows:
\begin{equation}
	\begin{split}
		\jointeval[f, \projection] &= \frac{1}{n} \sum_{i=1}^{n} f(\projection \hadamard x_i, y_i),
		\\
		\prodeval[f, \projection] &= \frac{1}{n} \sum_{i=1}^{n} \exp\left(f(\projection \hadamard x_{\sigma(i)}, y_i)\right),
	\end{split}
\end{equation}
\noindent where $\projection \hadamard x_i$ is the Hadamard product of 
$\projection$ and $x_i$, and $\projection$ denotes the weights of the feature vector. The first term $\jointeval[f, \projection]$ is used to approximate  $\mathbb{E}_{\mathbb{P}_{XY}} \mathcal{F}(X,Y)$,
while $\prodeval[f, \projection]$ approximates the second term $\mathbb{E}_{\mathbb{P}_{X} \otimes \mathbb{P}_{Y}}\mathcal{F}(X,Y)$ in equation (5).

\begin{definition}
    Let $f_\theta$, $\theta \in \Theta$ be a family of measurable functions $f_\theta: \spaceX \times \spaceY \rightarrow \R$ parameterized by the parameter $\theta \in \Theta$ of the neural network, we define an approximation of the negative of mutual information of $p \odot X$ and $Y$, denoted by $\donskervaradhanloss(\theta, \projection)$ as follows:
    \begin{equation}
	\donskervaradhanloss(\theta, \projection) = - \jointeval[f_\theta, \projection] + \log \left(\prodeval[f_\theta, \projection]\right)
\end{equation}
%\noindent where $\mu$ is the 
\end{definition}

\subsection{Loss function}
We design the loss function to incorporate regularization in the model as it is an effective way to induce sparsity in feature selection methods \citep{koyama2022effective}.
\begin{definition}
    Let $c_1$, $c_2$, $a$ be non-negative real coefficients, we define the loss function as:
    \begin{equation}
	\lossfun(\theta, p, c_1, c_2, a)
 \,\,
	=  
 \,\,
	\donskervaradhanloss(\theta, \projection)
		 + 
		c_1 \lonenorm[\frac{\projection}{\ltwonorm[\projection]}]
		 + 
		c_2 \left( \ltwonorm[\projection] - a \right)\squared,
\end{equation}
\noindent where $\lonenorm$ denotes $\Lone$-norm and $\ltwonorm$ denotes $\Ltwo$-norm.
\end{definition}
The function $\lossfun$ is the loss function that should be minimized. 
It consists of three terms. The first term $\donskervaradhanloss(\theta, \projection)$
is the discretisation of the function that appears in the Donsker-Varadhan representation of the KL-divergence. It approximates the negative mutual information between the target $Y$, and the $\projection$-weighted features. 

The second term $\lonenorm[\frac{\projection}{\ltwonorm[\projection]}]$ is a regularization term on the weights $\projection \in \Rd$. The regularization term induces sparsity in the model by pushing the weights of non-relevant features to zero. Introducing the regularization on the scaled norm ensures penalization of the relative distribution of the weights without affecting its overall magnitude. In addition, the normalization keeps the size of $p$ constant, allowing the focus to be purely on sparsity. 
Finally, the third term $\left( \ltwonorm[\projection] - a \right)\squared$ controls the euclidean norm of the weights $\projection \in \Rd$ by penalizing the square of the difference between said norm and the target norm $a$. This is meant to prevent the weights of relevant features from diverging.

Our feature selection method involves identifying a minimizer $\hat{\theta}$ of 
\begin{equation*}
\theta \longmapsto \donskervaradhanloss(\theta, \ones),
\end{equation*}
\noindent where $\ones = (1, \dots, 1) \in \Rd$, and then using this $\hat{\theta}$ as the initialisation of the gradient descent for the minimisation of 
\begin{equation*}
	\theta, \projection \longmapsto \lossfun\left(\theta, \projection, c_1, c_2, \sqrt{d}\right).
\end{equation*}
\noindent %%We stop gradient descent when the estimated mutual information between the weighted features and the targets becomes smaller than the mutual information that corresponds to the minimizer $\hat{\theta}$. 

%%%%WHY REPLACE target norm a with \sqr d?

We let $d$ be the average number of selected features and introduce $\sqrt{d}$ as a scaling factor of the drift term in the regularizer. This ensures stability of the regularization effect as the number of selected features changes. For instance, when the number of selected features is small, $\sqrt{d}$ will also be small and reduces the impact of the drift term from over-penalizing the small selection. After the gradient descent stops, we select the features that correspond to non-zero weights of $\projection$.
%More precisely, our method is described in Algorithm \ref{algo.minerva}.

The architecture of the neural network used in the parametrization  of the test functions $f_\theta$ is illustrated in Figure \ref{fig:model}. We separate the input into categorical and float features. We use the embedding layer to represent categorical features into a lower-dimensional space. To ensure stable numerical values and prevent large gradients, we pass categorical features through a soft clamp operation. The projection layer transforms the joint samples into a space suitable for estimation of mutual information.  The residual blocks process the feature representations and stabilize the learning of complex interactions in the data. 

Details on the implementation of our approach is shown in Algorithm \ref{algo.minerva}. The code to reproduce results reported in this paper is available at the project's github \href{https://github.com/transferwise/minerva}{repository}. 
We train the feature selection method in a two-stage process. First, in steps (1-6) of Algorithm \ref{algo.minerva}, we fix $p=\textbf{1}$ and train MINERVA to explore the dependency between $X$ and $Y$ without any feature selection constraints. This is important for learning  mutual information in a stable optimization process. Second, the learned $\theta$ is introduced as initialization in the feature selection step (7-14) and the network parameter $\varphi$ is updated to $\theta$, but subsequently optimized with sparsity-inducing regularizers. The goal of the second stage is to fine-tune the learned mutual information estimator while introducing regularization to select important features. Decoupling the learning process improves generalization by preventing the regularizers from interfering with the network’s ability to learn the joint distribution $P_{XY}$.

\begin{algorithm}
	\caption{Mutual Information Neural Estimation Regularized Vetting}
	\label{algo.minerva}
	\begin{algorithmic}[1]
		\REQUIRE
		random variables
		$X\in \spaceX$,
		$Y\in \spaceY$,
		hyperparameters
		$r>0$, $c_1\geq 0$, $c_2\geq 0$.
		\STATE $\theta \leftarrow$ initialise network parameters
		\REPEAT
		\STATE Draw $n$ samples $(x_1, y_1), \dots, (x_n, y_n)$ from the joint distribution $\Prob_{XY}$
		\STATE Sample shuffling permutation $\sigma$ from $S_n$
		\STATE Update $\theta \leftarrow \theta - r\gradient_{\theta}\donskervaradhanloss(\theta, \ones)$%, where $\ones = (1, \dots, 1)$.
		\UNTIL{ convergence}
		\STATE Initialise $\varphi \leftarrow \theta$, $\projection \leftarrow \ones$.
		\REPEAT
		\STATE Draw $n$ samples $(x_1, y_1), \dots, (x_n, y_n)$ from the joint distribution $\Prob_{XY}$
		\STATE Sample shuffling permutation $\sigma$ from $S_n$
		\STATE Update $\varphi \leftarrow \varphi - r \gradient_{\varphi}\lossfun(\varphi, \projection, c_1, c_2, \sqrt{d})$
		\STATE Update $\projection \leftarrow \projection - r \gradient_{\projection}\lossfun(\varphi, \projection, c_1, c_2, \sqrt{d})$
		\UNTIL{ convergence}
		\RETURN $\left\lbrace i: \abs{\projection_i} > 0\right\rbrace$
	\end{algorithmic}
\end{algorithm}

\begin{figure}[]
    \centering
    \includegraphics[width=1.05\linewidth]{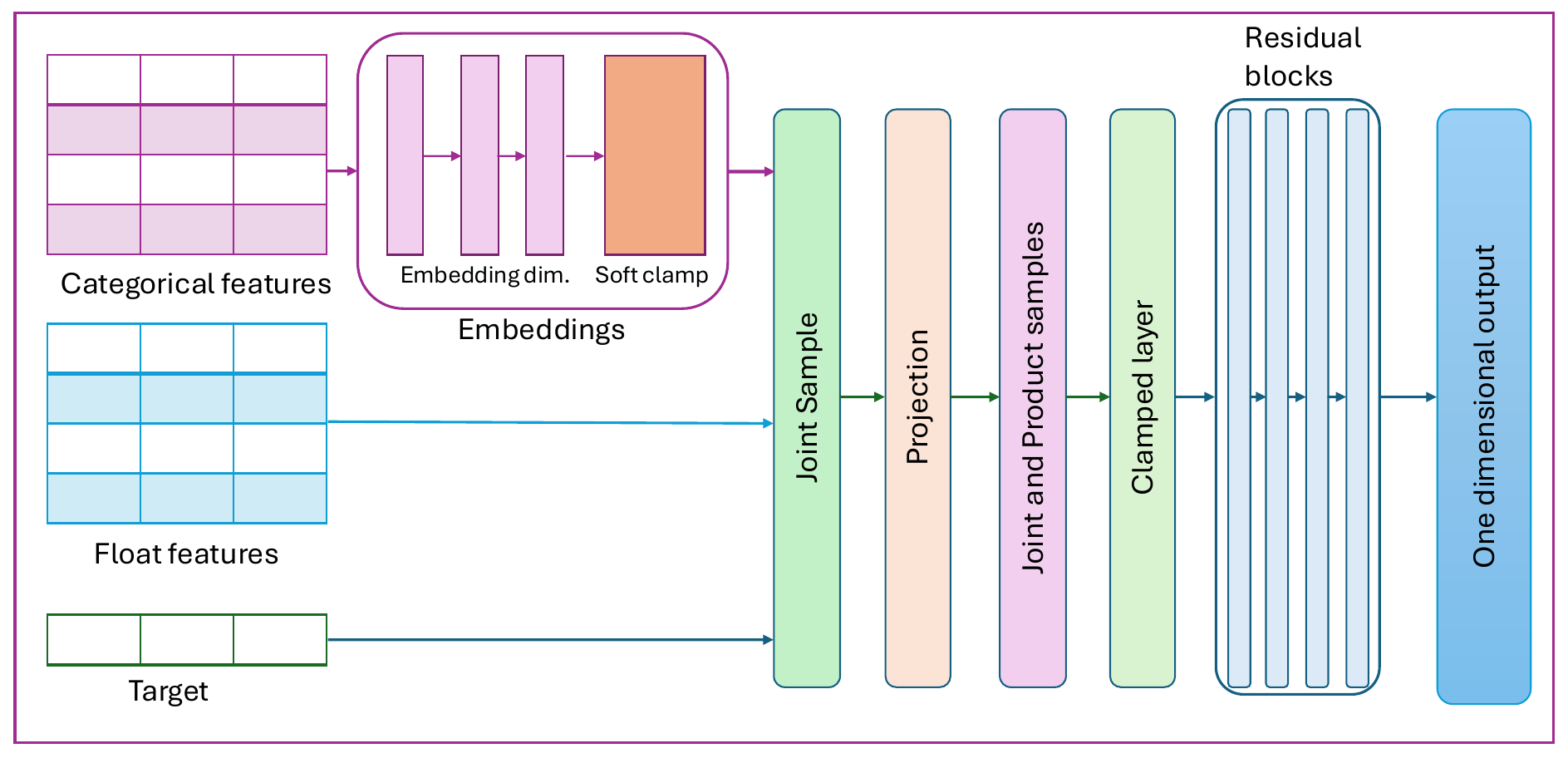}
     \vspace{-0.5cm}
    \caption{Neural network architecture for MINERVA}
    \label{fig:model}
\end{figure}

\section{Experiments}
\label{sec.experiments}
\subsection{Set-up}
For experiments using synthetic data, the regularization coefficient was set to 1 as larger values excessively penalized the weights. With regard to real-world data, the regularization coefficient was empirically selected and reported in the results.  The drift term $a$ was set to 1 in all experiments, while the learning rate was fixed to 0.0001. Since non-zero weighted features are selected after training, we set the threshold $\epsilon$ for $p$ to 0.00001 in all experiments. 

We conduct extensive experiments on synthetic and real-world datasets to validate our approach. When using synthetic data, the subset of features that generated the target is known, and we evaluate our method by reconciling the selected features against expected features.   

\begin{definition}
    Let $s$ be a subset of selected features $\lbrace1, \dots, d\rbrace$ and $t$ denote the set of features that generated the target $t\subset \lbrace 1, \dots, d\rbrace$ we define exact and non-exact selection such that $s$ is exact if $s=t$, and non-exact, otherwise. Moreover, if the selection is non-exact, either $t \not \subset s$ or $s \supsetneq t$. In the former case, the non-exact selection is classified as Type I, while in the latter case, it is classified as Type II. 
\end{definition}

Non-exact selections of Type I compromise the downstream prediction task 
because they subtract information that is relevant for the prediction. Non-exact selections of Type II might not reduce the dimensionality of the data, but they do not compromise downstream tasks. 

\subsection{Benchmark methods}
We test our feature selection method against five benchmark methods: KSG \citep{kraskov2004estimating}, Boruta \citep{kursa2010boruta}, HSCI Lasso \citep{yamada2014high}, Recursive Feature Elimination (RFE) \citep{guyon2002gene}, Feature Ordering by Conditional Independence (FOCI) \citep{azadkia2021simple} and Random Forest (RF) \citep{iranzad2024review}. KSG is a regression-based method for feature selection that uses approximation of mutual information proposed in \cite{kraskov2004estimating}. The method estimates mutual information $I(X_i; Y)$ for all $\{i = 1, \dots, d\}$ and selects features $k$ such that $I(X_k; Y) > \epsilon$ for a given threshold $\epsilon \geq 0$. The estimation of $I(X_i; Y)$ is performed using the KSG estimator which employs nonparametric techniques to estimate entropy based on $k$-nearest neighbor distances.
Boruta is a random forest-based wrapper method which utilizes an importance score to compare the significance of original features against randomised copies. 

HSIC Lasso is a kernel method that captures non-linear input-output dependence based on maximizing:
\begin{equation}
   % \frac{1}{2} \left\| \bm{\bar{L}} - \sum_{k=1}^{d} \alpha_k \bm{\bar{K}}^{(k)} \right\|^2_{\text{Frob}}
   % =
   % \frac{1}{2} \text{HSIC}(\bm{y}, \bm{y}) 
    - \sum_{k=1}^{d} \alpha_k \text{HSIC}(\bm{u}_k, \bm{y})
    + \frac{1}{2} \sum_{k,l=1}^{d} \alpha_k \alpha_l \text{HSIC}(\bm{u}_k, \bm{u}_l),
\end{equation}
\noindent where $\text{HSIC}(\bm{u}_k, \bm{y})$ is the kernel-based independence measure of the Hilbert-Schmidt independence criterion \citep{gretton2005measuring}, and $\bm{\alpha_k, \alpha_l}$ denote the coefficients of the features. 

RFE recursively selects features by training an estimator, ranking features by importance, and pruning the least important ones until the desired number of features is reached. This process is repeated on smaller feature subsets to optimize selection. FOCI is a nonlinear, non-parametric variable selection algorithm based on a new measure of conditional dependence of variables $Y$ and $\boldsymbol{Z}$ given $\boldsymbol{X}$, specified as 
\begin{equation}
    T = T(Y, \mathbf{Z}|\mathbf{X}) := \frac{\int \mathbb{E}(Var(\mathbb{P}(Y \geq t|\mathbf{Z}, \mathbf{X})|\mathbf{X}))d\mu(t)}{\int \mathbb{E}(Var(1_{\{Y \geq t\}}|\mathbf{X}))d\mu(t)}
\end{equation}
\noindent where $1_{\{Y \geq t \}}$ is the indicator function of the event $\{Y \geq t \}$. We used RF to rank features based on their importance scores, and select the top-ranked features while discarding the less important ones. In addition, the benchmarks were optimized using cross-validation to selected the most informative features.

\subsection{Synthetic Datasets}
We investigate the phenomenon in which a target variable $Y$ depends on whether
two independent discrete random variables $X_{k_0}$  and $X_{k_1}$ are equal. This kind of dependence is relevant in financial transactions. For example, if two independent transactions share the same device ID but originate from different users, this could indicate account takeover or fraud. Despite the relevance of this problem in practice, existing feature selection methods do not capture this dependence.

\subsubsection{Experiment A}
Let $d$ be a positive integer, and  $m$ be another positive integer larger than $2$. For $i \in \{1, \dots, d\}$, we let $X_i$ be a random positive integer taking values in $i \in \{1, \dots, m\}$. Random variables $X_1, \dots, X_d$ are assumed to be independent and distributed identically. We fix two integers $ 1 \leq k_0 < k_1 \leq d$
and define
\begin{equation}
	\label{eq.exp1Y}
	Y = \one\left\lbrace
	X_{k_0} = X_{k_1}
	\right
	\rbrace
	=
	\begin{cases}
		1 & \text{ if } X_{k_0} = X_{k_1}
		\\
		0 & \text{ otherwise}.
	\end{cases}
\end{equation}

We address the problem of predicting $Y$ from the feature vector $(X_1, \dots, X_d)$,
and aim to identify the subset of features that are most relevant for accurate prediction. 
\begin{lemma}
    Feature selection methods that rely on a dependence metric $h(X_i, Y)$ to evaluate individual features are inherently flawed as any  pair-wise assessment of $(X_i, Y)$ is bound to fail since $Y$ is independent of each individual feature $X_i$. Exact selection is only possible when considering the joint information provided by the entire feature set $X_1, \dots, X_d$.
\end{lemma}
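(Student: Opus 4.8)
The mathematical heart of the statement is the assertion that the target $Y$ is independent of every single feature $X_i$, despite being a deterministic function of the pair $(X_{k_0}, X_{k_1})$. I would prove the lemma in three movements: first establish the pairwise independence $Y \perp X_i$ for all $i$, then deduce that every pairwise dependence score vanishes, and finally exhibit the genuine dependence carried by the full feature set.

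First I would prove $Y \perp X_i$ for each $i \in \{1, \dots, d\}$, splitting into two cases. For $i \notin \{k_0, k_1\}$ the claim is immediate: $X_i$ is independent of the pair $(X_{k_0}, X_{k_1})$ by the i.i.d. assumption, and $Y$ is a measurable function of that pair, so $Y \perp X_i$. The substantive case is $i \in \{k_0, k_1\}$; by the symmetry of $Y = \one\{X_{k_0} = X_{k_1}\}$ in its two arguments I take $i = k_0$. Using the uniform law $\Prob(X_j = a) = 1/m$ and independence of $X_{k_0}, X_{k_1}$, I would compute the joint law directly:
\begin{equation*}
\Prob(X_{k_0} = a,\, Y = 1) = \Prob(X_{k_0} = a,\, X_{k_1} = a) = \frac{1}{m}\cdot\frac{1}{m},
\end{equation*}
and compare it with $\Prob(X_{k_0} = a)\,\Prob(Y = 1) = \frac{1}{m}\cdot\frac{1}{m}$, where $\Prob(Y=1) = \sum_{a=1}^{m}\Prob(X_{k_0}=a)\Prob(X_{k_1}=a) = 1/m$. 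The factorization for $Y = 0$ then follows by complementation, so $Y$ and $X_{k_0}$ (and likewise $X_{k_1}$) are independent.

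Having established independence, I would invoke the defining property shared by the dependence metrics under discussion: mutual information, HSIC, and distance correlation all vanish exactly when their two arguments are independent. Hence $h(X_i, Y)$ attains its uninformative value for every $i$, the marginal scores are identical across all coordinates, and they carry no signal to distinguish the generative pair $\{k_0, k_1\}$ from the irrelevant ones; any ranking-based filter is thereby reduced to chance and cannot reliably return the exact set. To close the contrast I would note that $Y$ is a deterministic function of $(X_{k_0}, X_{k_1})$, so $I\big((X_{k_0}, X_{k_1}); Y\big) = H(Y) = -\sum_{y}\Prob(Y=y)\log \Prob(Y=y) > 0$ since $\Prob(Y=1) = 1/m \in (0,1)$; the relevant information surfaces only jointly.

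The main obstacle is conceptual rather than computational: the temptation is to regard $X_{k_0}$ and $X_{k_1}$ as "relevant" and therefore correlated with $Y$, whereas the crux is precisely that individual relevance vanishes under the uniform law. I would make this dependence on uniformity explicit, since the computation quietly relies on it—for a non-uniform common law the conditional $\Prob(Y=1 \mid X_{k_0}=a) = \Prob(X_{k_1}=a)$ would depend on $a$ and break the independence—and flag uniformity as the exact condition under which pairwise methods provably fail while the joint signal persists.
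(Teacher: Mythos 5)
Your proof is correct, and it diverges from the paper's in a useful way on the second half. For the pairwise independence, you and the paper do essentially the same computation in different clothing: the paper shows the conditional law $\Prob(Y=y \mid X_1 = x_1)$ is constant in $x_1$ and equal to the marginal, then plugs into the mutual-information sum so every logarithm is $\log 1 = 0$; you show the joint law factorizes, $\Prob(X_{k_0}=a,\,Y=1) = \tfrac{1}{m^2} = \Prob(X_{k_0}=a)\Prob(Y=1)$, and invoke the standard fact that mutual information (and HSIC, distance correlation) vanishes under independence. One small point in your favor here: you handle $i \notin \{k_0,k_1\}$ explicitly via the i.i.d.\ structure, whereas the paper only asserts that this case can be skipped. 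The genuine difference is the joint term. The paper evaluates $I(X_{k_0},X_{k_1};Y)$ by brute force, summing $\frac{1}{m^2}\sum_{x_1,x_2}\sum_{y} c(x_1,x_2,y)\log(\cdots)$ over all $m^2$ pairs and collapsing the indicator algebra to reach the closed form \eqref{eq.exp1mi}. You instead observe that $Y$ is a deterministic function of $(X_{k_0}, X_{k_1})$, so $I(X_{k_0},X_{k_1};Y) = H(Y) - H(Y\mid X_{k_0},X_{k_1}) = H(Y)$, reducing everything to the binary entropy of a Bernoulli$(1/m)$ variable. This is cleaner, and if you write out that entropy you recover exactly the paper's formula, $H(Y) = \tfrac{1}{m}\log m + \tfrac{m-1}{m}\log\tfrac{m}{m-1}$, so you should add that one line rather than stopping at $H(Y) > 0$, since the lemma asserts the explicit value. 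Your route has a second payoff that the paper leaves on the table: the lemma's final claim $I(X_{k_0}; Y \mid X_{k_1}) = I(X_{k_0},X_{k_1};Y)$, which the paper states but never proves, drops out of your framework immediately from the chain rule, $I(X_{k_0},X_{k_1};Y) = I(X_{k_1};Y) + I(X_{k_0}; Y\mid X_{k_1})$, together with your pairwise independence result $I(X_{k_1};Y)=0$. Finally, your closing remark that independence hinges on the uniform law (since $\Prob(Y=1\mid X_{k_0}=a) = \Prob(X_{k_1}=a)$ would otherwise vary with $a$) is correct and is a sharper delineation of the lemma's scope than anything in the paper.
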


\begin{proof}
    \label{lemma.experiment1}
	Let $m>2$ be a positive integer and $X_1, \dots, X_d$ be independent,      
        identically distributed variables with $\Prob(X_1 = n) = 1 / m$ for $n= \{1, \dots, m\}$. 
	Let $k_0$ and $k_1$ be two distinct positive integers less than or equal to 
        $d$, and $Y$ be as in equation \eqref{eq.exp1Y}.
	Then, for all $i= \{1, \dots, d\}, $ $I(X_i; Y) = 0,$
		\label{eq.exp1pairwisemi}	
        namely $X_i$ and $Y$ are independent. 
	Moreover,
	\begin{equation}
		\label{eq.exp1mi}
		I(X_{k_0}, X_{k_1}; Y) =	\frac{m-1}{m} \log\left(\frac{m}{m-1}\right)
			+ \frac{1}{m}\log m,
	\end{equation}
	\noindent and $I(X_{k_0}; Y \lvert X_{k_1}) = I(X_{k_1}; Y \lvert X_{k_0})  = I(X_{k_0}, 
        X_{k_1} ; Y)$. Full proof is provided in \ref{app1}
\end{proof}

Table \ref{tab:experimentA} summarizes the results of Experiment A. We generated 30 features and 50000 samples; and expected the models to select features 3 and 8. However, results show that HSCI Lasso, KSG, Boruta and RF could not compute the exact solution, as they selected all the features. Their selection are non-exact of Type II. This is not surprising since existing methods compare pair-wise dependencies instead of considering the entire feature set as an ensample. Moreover, \cite{belghazi2018mutual} showed that MINE exhibited a marked improvement over KSG \citep{kraskov2004estimating} when estimating mutual information. FOCI and MINERVA selected exact features, demonstrating significant gain over existing baselines. For this experiment, our results indicate that tree-based algorithms such as Boruta and Random Forest perform poorly while FOCI, which depends on nonlinear statistical dependence, and our proposed neural network-based method achieved exact selection. 

\begin{table}[h!]
    \centering
     \caption{Comparison of feature selection methods.}
    \renewcommand{\arraystretch}{1.2} % Adjust row height
    \setlength{\tabcolsep}{8pt}       % Adjust column spacing
    \begin{tabular}{l l l l}
        \toprule
        \textbf{Method} & \textbf{Selected} & \textbf{Expected} & \textbf{Evaluation} \\
        \midrule
        KSG & 1, \dots, 30 & 3, 8 & Non-exact Type II \\
        HSIC Lasso & 1, \dots, 30 & 3, 8 & Non-exact Type II \\
        Boruta &1, \dots, 30 & 3, 8 & Non-exact Type II \\
        RFE  & 1, \dots, 30 & 3, 8 & Non-exact Type II \\
        Random Forest & 1, \dots, 30 & 3, 8 & Non-exact Type II \\
        FOCI  & \textbf{3, 8}  &   \textbf{3, 8}        & \textbf{Exact} \\

        \textbf{MINERVA} & \textbf{3, 8} & \textbf{3, 8} & \textbf{Exact} \\
        \bottomrule
    \end{tabular}
    \label{tab:experimentA}
\end{table}

\subsubsection{Experiment B}
We also evaluate MINERVA in the context of predicting the target $Y$, where $Y$ is a nonlinear function of continuous features that depend on whether two independent variables are equal, as defined in Experiment A. For instance, given two continuous nonlinear functions $f_1$ and $f_2$, the target variable is determined by a transformation of certain continuous features via $f_1$  when two discrete variables are equal, and via $f_2$ when they are not.

Suppose $d_1$ and $d_2$ are positive integers, and $\{X_1, \dots, X_{d_1}\}$ are i.i.d random variables such that $\Prob(X_1 = k) = 1/m$ for $k=1, \dots, m$, for some positive integer $m > 1$. Let $\{X_{d_1 + 1}, \dots, X_{d_1 + d_2}\}$ be independent, identically distributed random variables with uniform distribution on the unit interval. It suffices that $\{X_1, \dots, X_{d_1}\}$ and $\{X_{d_1 + 1}, \dots, X_{d_1 + d_2}\}$ are independent. Given $k_0, k_1$ to be distinct positive integers smaller than or equal to $m$,  and $n < d_2$, such that $d_1 < j_0 <  \dots < j_n \leq d_1 + d_2$ and $d_1 < i_0 <  \dots < i_n \leq d_1 + d_2$, we define
\begin{equation}
	\label{eq.exp2target}
	Y 
	=
	\begin{cases}
		\sum_{\ell = 1}^{\ell=n} \alpha_\ell \sin\left(2\pi X_{j_\ell}\right)  
		& \text{ if } X_{k_0} = X_{k_1}
		\\
		\sum_{\ell = 1}^{\ell=n} \beta_\ell \cos\left(2\pi X_{i_\ell}\right)  
		& \text{ otherwise}.
	\end{cases}
\end{equation}
\noindent where $\alpha_l$ and $\beta_l$ are coefficients of the sine and cosine terms. In this setting, $Y$ depends on a nonlinear function if $X_{k0}=X_{k1}$, and another if $X_{k0}\neq X_{k1}$. We address the task of predicting $Y$ given a continuous feature vector $(X_1, \dots, X_{d_1}, X_{d_1 + 1}, \dots, X_{d_1 + d_2})$, and select the features that are most relevant for the prediction.

We set the number of features to 40 and generated 50000 samples. We expected 10 features to be selected: $\textbf{[6, 8, 14, 18, 19, 20, 23, 24, 28, 31]}$. Table \ref{tab:experiment1B} summarizes our main findings. The number of selected features was fixed at 10 to ensure a standardized comparison across models. KSG selected features of the non-exact Type 1 of which 7 were among the expected features. Of the 10 features selected by HSCI Lasso and Boruta, 8 were among the expected features, which is a slight improvement over KSG. Half of the features selected by RFE and RF were among the expected features. Meanwhile, FOCI performed poorly with only 3 expected features. However, none of the baselines was able to perform an exact feature selection. MINERVA showed a marked improvement over the baselines by performing an exact selection. This is largely due to the model's ability to consider joint information of the feature set as an ensample.

\begin{table}[h]
    \centering
        \caption{Experiment B: Performance evaluation (NE = Non-Exact).}
    \renewcommand{\arraystretch}{1.2} % Adjust row height for readability
    \setlength{\tabcolsep}{8pt}       % Adjust column spacing
    \begin{tabular}{l l l l}
        \toprule
        \textbf{Method} & \textbf{Selected} & \textbf{Evaluation} \\
        \midrule
        KSG &  
        \small{\textbf{14}, \textbf{18}, \textbf{19}, \textbf{20}, \textbf{23}, 25, \textbf{28}, \textbf{31}, 34, 38}  &  
        NE Type I \\
        HSIC Lasso &  
        \small{4, 11, \textbf{14}, \textbf{18}, \textbf{19}, \textbf{20}, \textbf{23}, \textbf{24}, \textbf{28}, \textbf{31}} &   
        NE Type I \\
        Boruta &  
        \small{\textbf{14}, \textbf{18}, \textbf{19}, \textbf{20}, \textbf{23}, \textbf{24}, \textbf{28}, \textbf{31}, 37, 38} &    
        NE Type I \\
        RFE & 
        \small {9, 12, \textbf{14}, 15, 16, \textbf{20}, \textbf{24}, \textbf{28}, \textbf{31}, 37} &
        NE Type I \\
        Random Forest &
        \small{11, 12, \textbf{14}, \textbf{19}, \textbf{23}, \textbf{28}, \textbf{31}, 37, 38, 40} &
        NE Type I \\
        FOCI  &
        \small {1, 7, \textbf{8}, 10, 11, \textbf{19}, 25, 26, \textbf{31}, 33} &
        NE Type I \\
        \textbf{MINERVA} & \textbf{ 
        \small{6, 8, 14, 18, 19, 20, 23, 24, 28, 31}} & 
       \textbf{
        Exact} \\
        \bottomrule
    \end{tabular}
    \label{tab:experiment1B}
\end{table}

To validate our method, we employ gradient boosting method to evaluate the quality of selected features by assessing their contribution to predictive performance. Thus, given a set of features selected by different feature selection methods, we train a gradient boosting model using each feature subset and compare its predictive accuracy. This allows us to quantify how well each selection method captures the most informative features for predicting the target variable $Y$. 
%GBMs are particularly suited for this evaluation due to their ability to model complex nonlinear relationships and interactions between features. 
We evaluate the predictive performance of gradient boosting method using both in-sample $R^2$ (which measures how well the model fits the training data) and out-of-sample $R^2$ (which evaluates generalization performance on unseen data). A higher out-of-sample $R^2$ indicates that the selected features contribute effectively to prediction while avoiding overfitting. We split the data into 80\% for training and 20\% for out-of-sample testing.

The results of gradient boosting method are shown in Table \ref{tab:experiment1Baccuracy}. When all the 40 features are used, out-of-sample $R^2$ is 79.90\%, which is higher than all the baselines. Despite selecting an identical subset of expected features, Boruta achieves marginally higher out-of-sample performance (70.0\%) compared to KSG (69.8\%). While RFE and RF achieve comparable results (62.6\% and 62.13\%, respectively), FOCI underperforms (62.13\%) - an expected outcome given its selection of the fewest correct features. Overall, MINERVA achieves the best performance for both in-sample and out-of-sample $R^2$, reaching 84.69\%. The results validate the superiority of MINERVA in selecting the most informative features that are crucial for predicting the target variable.

\begin{table}[h]
    \centering
    \caption{Experiment B - Accuracy of a gradient boosting model}
    \renewcommand{\arraystretch}{1.2} % Adjust row height for readability
    \setlength{\tabcolsep}{10pt}       % Adjust column spacing
    \begin{tabular}{l c r r}
        \toprule
        \textbf{Method} & \textbf{\# of Features} & \textbf{In-Sample \(R^2\)} & \textbf{Out-of-Sample \(R^2\)} \\
        \midrule
        All Features & 40 & 0.8615 & 0.7990 \\
        KSG & 10 & 0.7647 & 0.6980 \\
        HSIC Lasso & 10 & 0.7717 & 0.7004 \\
        Boruta & 10 & 0.7669 & 0.7023 \\
        RFE  & 10 &  0.7000   & 0.6260 \\
        Random Forest  & 10  & 0.7010 & 0.6213 \\
        FOCI &10  & 0.6528  & 0.5857 \\
        \textbf{MINERVA} & \textbf{10} & \textbf{0.8799} & \textbf{0.8469} \\
        \bottomrule
    \end{tabular}
    \label{tab:experiment1Baccuracy}
\end{table}

\subsection{Real-life Dataset}
We investigate the performance of MINERVA on a real-world fraud dataset from a financial company. The dataset consists of 3 million samples and 214 processed features. The aim is to determine a subset of features that are more informative for predicting fraud. However, the dataset is highly imbalanced, with only 0.1\% frequency of positive labels. In financial risk management, the cost of misclassifying a fraudulent transaction as normal is often much higher than the cost of the reverse error. Therefore, reducing noise and eliminating redundant features is essential for enhancing predictive performance.
%It is imperative to reduce noise and redundant features to ultimately increase predictive performance.

Previous studies have addressed the issue of imbalanced data by penalizing wrong classification of training samples \citep{domingos1999metacost}, and either under-sampling the majority class or oversampling the minority class \citep{cheng2020graph}. We apply Synthetic Minority Over-sampling Technique (SMOTE) \citep{chawla2002smote} to handle data imbalance. SMOTE creates clusters around each minority observation by generating minority samples that are within the neighbourhood of the observed samples. %why use it.

%\subsection{Experimental Set Up}
We separated data into 3 equal sets, and performed over-sampling with SMOTE on the first and second data sets. The first set was used to train the feature extractor. The selected features were evaluated using Fast and Lightweight Auto-Machine Learning library (FLAML) \citep{wang2021flaml}. Given the size of our data set, we employ FLAML since it exploits the structure of the search space to determine a search order optimized for both cost and error in finding accurate models. We used the second set to train FLAML while the last set was left imbalanced and used for testing. Benchmark models were optimized to select the most important features based on their objective function. Table \ref{tab:feature_selection} summarizes our findings. 

\begin{table}[h]
\centering
\caption{Performance comparison of feature selection methods.}
\renewcommand{\arraystretch}{1.1} % Slightly increase row spacing
\setlength{\tabcolsep}{4pt} % Reduce column spacing
\begin{tabular}{l@{\hskip 4pt}c@{\hskip 4pt}c@{\hskip 4pt}c@{\hskip 4pt}c@{\hskip 4pt}c@{\hskip 4pt}c}
\hline
 & All & Minerva$10^3$ & Minerva$10^4$ & HSCI & Boruta & KSG \\
\hline
\# of Features & 214 & 160 & 90 & 188 & 36 & 197 \\
In-sample Recall & 1.000 & 1.000 & 1.000 & 1.000 & 0.999 & 1.000 \\
Out-Sample Recall & \textbf{0.573} & \textbf{0.573} & 0.570 & \textbf{0.573} & 0.531 & 0.573 \\
In-sample Precision & 1.000 & 1.000 & 1.000 & 1.000 & 1.000 & 1.000 \\
Out-sample Precision & 0.935 & 0.933 & 0.915 & 0.928 & 0.861 & \textbf{0.937} \\
In-sample PR-AUC & 1.000 & 1.000 & 1.000 & 1.000 & 1.000 & 1.000 \\
Out-sample PR-AUC & \textbf{0.750} & 0.746 & 0.736 & 0.749 & 0.685 & \textbf{0.750} \\
Fitted Method & RF & RF & RF & RF & RF & RF \\
\hline
\end{tabular}
\label{tab:feature_selection}
\end{table}

We report results in terms of in-sample and out-of-sample recall, precision and precision-recall area under curve (PR-AUC) which are standard accepted performance metrics \citep{chawla2002smote}. The evaluation is conducted on the full feature set, features selected by benchmark models, and two variants of MINERVA with regularization coefficients set to $10^3$ and $10^4$. When optimizing FLAML, Random Forest (RF) was found to be the best model in all cases. In this experiment, we evaluated our method against the three highest performing baselines. Besides, we optimized the benchmarks to determine the number of informative features without constraining the total feature set. First, we observe that this dataset presents a significant challenge, as none of the feature selection methods yield substantial improvements over using all features. When the regularization coefficient was set to $10^3$, MINERVA selected 160 features and achieved the highest out-of-sample recall of 0.573, demonstrating strong performance on real-world datasets. Increasing the regularization coefficient to $10^4$ degraded the model's performance, indicating its sensitivity to the regularization parameter. While KSG and HSCI Lasso selected the highest number of features, the results indicate that KSG achieved slightly better performance than HSCI Lasso with regards to out-of-sample precision. KSG, HSCI Lasso, and MINERVA demonstrated strong performance based on out-of-sample PR-AUC, and Boruta consistently underperformed across all metrics. This is not surprising given that Boruta selected the smallest subset of features.

\section{Conclusions}
\label{sec.conclusion}
We presented MINERVA, a feature selection method based on neural estimation of mutual information. We validated our approach using synthetic and real-world datasets. Synthetic data was generated to address a prevalent dependence mechanism that is rarely captured by existing methods. The target variable was derived from a transformation of specific continuous features, where the transformation method depends on whether two discrete variables are equal or not.  Results on synthetic data showed a substantial improvement of our method over existing baselines, with MINERVA being the only method that selected the exact features.  

We also evaluated MINERVA on a real-life, highly unbalanced dataset (Card-fraud), where the minority class accounts for only 0.1\% of the 3 million observations. SMOTE was employed to over-sample the minority class and we performed experiments to determine a subset of features that are most relevant for fraud prediction. Experimental results showed that our method demonstrates strong performance on real-world data. For the future work, the performance of our method on more real-world applications such as bioinformatics, computer vision, and speech and signal processing will be investigated.

\section{Acknowledgments}
This work was supported by the Innovate UK Business Connect.

\appendix
\section{Proof of \textbf{Lemma 1}}
\label{app1}
\begin{proof}[Proof of Lemma \ref{lemma.experiment1}]
	For ease of notation, 
	take 
	$k_0 = 1$, 
	$k_1 = 2$. 
	We only need to prove $I(X_i; Y) = 0,$ %equation \eqref{eq.exp1pairwisemi}
	for $i = k_0, k_1$. 
	For integers $i$, $y$, 
	let 
	\begin{equation*}
		a(y, i) = \one(y = i) 
		= 
		\begin{cases}
			1 & \text{ if } y = i
			\\
			0 & \text{ otherwise}.
		\end{cases}
	\end{equation*}
	For 
	$x_1 = 1, \dots, m$
	and
	$y = 0, 1$
	we have
	\begin{equation*}
		\Prob(Y = y \vert X_1 = x_1)
		=
		\begin{cases}
			\Prob(X_2 \neq x_1) & \text { if } y = 0
			\\
			\Prob(X_2 = x_1) & \text { if } y = 1
		\end{cases}
		=
		\frac{m-1}{m} a(y, 0)
		+
		\frac{1}{m} a(y, 1)
	\end{equation*}
	Therefore,
	\begin{equation*}
		\begin{split}
			\Prob(Y = y) 
			& =
			\sum_{x_1 = 1}^{m}
			\Prob(X_1 = x_1, Y = y)
			\\
			& = 
			\sum_{x_1 = 1}^{m}
			\Prob(Y = y \vert X_1 = x_1)
			\Prob(X_1 = x_1)
			\\
			& = 
			\frac{1}{m}
			\sum_{x_1 = 1}^{m}
			\left(
			\frac{m-1}{m} a(y, 0)
			+
			\frac{1}{m} a(y, 1)
			\right)
			\\
			& = 
			\Prob(Y = y \vert X_1 = x_1)
		\end{split}
	\end{equation*}
     \vspace{-.5cm}
	where on the last line $x_1$ is any positive integer smaller than or equal to $m$. 
	We conclude that
	\begin{equation*}
		\begin{split}
		I(X_1; Y)
		& = 
		\sum_{x_1 = 1}^{m}
		\sum_{y=0}^{1}
		\Prob(X_1 = x_1, Y = y)
		\log
		\left(
		\frac{
		\Prob(X_1 = x_1, Y = y)
		}
		{
			\Prob(X_1 = x_1)\Prob(Y=y)
		}
		\right)
		\\
		& = 
		\sum_{x_1 = 1}^{m}
		\sum_{y=0}^{1}
		\Prob(X_1 = x_1, Y = y)
		\log
		\left(
		\underbrace{
		\frac{
		\Prob(X_1 = x_1, Y = y)
		}
		{
			\Prob(X_1 = x_1)
			\Prob(Y = y \vert X_1 = x_1)
		}
		}_{ = 1}
		\right)
		\\
		& = 
		0.
		\end{split}
	\end{equation*}
	The equality $I(X_2; Y) = 0$ is proved in the same way. 

	Finally, we establish equation \eqref{eq.exp1mi}.
	For integers $x_1, x_2$, 
	let 
	$b(x_1, x_2) = 1$ if $x_1 = x_2$,
	and $b(x_1, x_2) = 0$ otherwise.
	Then, 
	for positive integers $x_1, x_2 \leq m$ and $y=0, 1$,
	we can write
	\begin{equation*}
		\Prob(Y=y \vert X_1 = x_1, X_2 = x_2)
		=
		a(y, 0)( 1 - b(x_1, x_2) )
		+
		a(y, 1) b(x_1, x_2),
	\end{equation*}
	and
	\begin{equation*}
		\begin{split}
		\Prob(X_1 = x_1, X_2 = x_2, Y = y)
			&=
		\Prob(Y=y \vert X_1 = x_1, X_2 = x_2)
		\Prob(X_1 = x_1, X_2 = x_2)
		\\
			&=
		\frac{1}{m\squared}
		\Big(
		a(y, 0)( 1 - b(x_1, x_2))
		+
		a(y, 1) b(x_1, x_2)
		\Big),
		\end{split}
	\end{equation*}
	and
	\begin{equation*}
		\begin{split}
			\Prob(X_1 = x_1, X_2 = x_2) \Prob(Y = y)
			&=
			\frac{1}{m\squared}
			\left(
			\frac{m-1}{m} a(y, 0)
			+
			\frac{1}{m} a(y, 1)
			\right).
		\end{split}
	\end{equation*}
	Let 
	$c(x_1, x_2, y) = 
			a(y, 0)( 1 - b(x_1, x_2) )
			+
			a(y, 1) b(x_1, x_2)
			$.
	Plugging these in the definition of the mutual information between $(X_1, X_2)$ and $Y$,
	we conclude
	\begin{align*}
		%\begin{split}
			I(X_1, X_2; Y)
			& =
			\frac{1}{m\squared}
			\sum_{x_1, x_2 = 1}^{m}
			\sum_{y=0}^{1}
			\left(
			c(x_1, x_2, y)
			\right)
			\log
			\left(
			\frac
			{
			c(x_1, x_2, y)
			}
			{
				\frac{m-1}{m} a(y, 0) + \frac{1}{m} a(y, 1)
			}
			\right)
			\\
			& = 
			\frac{1}{m\squared}
			\sum_{x_1, x_2 = 1}^{m}
			\left(
			(1 - b(x_1, x_2))
			\log
			\left(
			\frac{m(1-b(x_1, x_2))}{m-1}
			\right)
			+
			b(x_1, x_2)
			\log
			\left(
			m b(x_1, x_2)
			\right)
			\right)
			\\
			& =
			\frac{1}{m\squared}
			\sum_{x_1 = 1}^{m}
			\left(
			(m-1)\log\left(\frac{m}{m-1}\right) + \log(m)
			\right)
			\\
			& =
			\frac{m-1}{m} \log\left(\frac{m}{m-1}\right)
			+
			\frac{1}{m}\log m
			.
		%\end{split}
	\end{align*}
\end{proof}

%\newpage
\bibliographystyle{elsarticle-num} 
\bibliography{bibliography}

\end{document}